\newcommand{\FrameboxA}[2][]{#2}
\newcommand{\Framebox}[1][]{\FrameboxA}
\newcommand{\hf}{{\frac 12}}
\newcommand{\bfA}{{\bf A}}
\newcommand{\bfB}{{\bf B}}
\newcommand{\bfC}{{\bf C}}
\newcommand{\bfI}{{\bf I}}
\newcommand{\bfJ}{{\bf J}}
\newcommand{\bfK}{{\bf K}}
\newcommand{\bfL}{{\bf L}}
\newcommand{\bfW}{{\bf W}}
\newcommand{\bfX}{{\bf X}}
\newcommand{\bfY}{{\bf Y}}
\newcommand{\bfZ}{{\bf Z}}
\newcommand{\bfc}{{\bf c}}
\newcommand{\bfe}{{\bf e}}
\newcommand{\bfh}{{\bf h}}
\newcommand{\bfx}{{\bf x}}
\newcommand{\bfy}{{\bf y}}
\newcommand{\bfw}{{\bf w}}
\newcommand{\bfz}{{\bf z}}
\newcommand{\diag}{{\sf{diag}}\,}
\newcommand{\R}{\ensuremath{\mathds{R}}}
\newtheorem{lemma}{Lemma}
\newtheorem{proof}{Proof}
\newtheorem{example}{Example}
\author[1,3]{Eldad Haber}
\author[2,3]{Lars Ruthotto}
\affil[1]{Department of Earth and Ocean Science, The University of British Columbia, Vancouver, BC, Canada, (\url{haber@math.ubc.ca})}
\affil[2]{Emory University,
Department of Mathematics and Computer Science, Atlanta, GA, USA
(\url{lruthotto@emory.edu})}
\affil[3]{Xtract Technologies Inc., Vancouver, Canada, (\url{info@xtract.tech})}
\title{Stable Architectures for Deep Neural Networks }
\begin{document}

\maketitle

\begin{abstract}
    Deep neural networks have become invaluable tools for supervised machine learning, e.g., classification of text or images. While often offering superior results over traditional techniques and successfully expressing complicated patterns in data, deep architectures are known to be challenging to design and train such that they generalize well to new data. Critical issues with deep architectures are numerical instabilities in derivative-based learning algorithms commonly called exploding or vanishing gradients.  In this paper, we propose new forward propagation techniques inspired by systems of Ordinary Differential Equations (ODE) that overcome this challenge and lead to well-posed learning problems for arbitrarily deep networks.

     The backbone of our approach is our interpretation of deep learning as a parameter estimation problem of nonlinear dynamical systems. Given this formulation, we analyze stability and well-posedness of deep learning and use this new understanding to develop new network architectures. We relate the exploding and vanishing gradient phenomenon to the stability of the discrete ODE and present several strategies for stabilizing deep learning for very deep networks. While our new architectures restrict the solution space, several numerical experiments show their competitiveness with state-of-the-art networks.
\end{abstract}

{\small \textbf{Keywords. } Machine Learning, Deep Neural Networks, Dynamic Inverse Problems, PDE-Constrained Optimization, Parameter Estimation, Image Classification.}


\section{Introduction}
\label{sec:intro}

In this work, we propose new architectures for Deep Neural Networks (DNN) and exemplarily show their effectiveness for solving supervised Machine Learning (ML) problems; for a general overview about DNN and ML see, e.g.,~\cite{moller1993scaled,friedman2001elements,abu2012learning,GoodfellowEtAl2016} and reference therein. 
We consider the following classification problem: Assume we are given \emph{training data} consisting of $s$ feature vectors, $\bfy_1,\ldots,\bfy_s \in \R^{n}$, and label vectors, $\bfc_1, \ldots, \bfc_s \in \R^m$, whose $k$th components represent the likelihood of an example belonging to class $k$.  The goal is to \emph{learn} a function that approximates the data-label relation on the training data and generalizes well to similar unlabeled data. Our goals in this work are to highlight the relation of the learning problem to dynamic inverse problems, analyze its stability and ill-posedness for commonly used architectures, and derive new architectures that alleviate some of these difficulties for arbitrarily deep architectures.

We are particularly interested in deep learning, i.e., machine learning using neural networks with many hidden layers. DNNs have been successful in supervised learning, particularly when the relationship between the data and the labels is highly nonlinear; see, e.g., \cite{krizhevsky2012imagenet,hinton2012deep,bengio2009learning,lecun2015deep} and reference therein. Their depths (i.e., their number of layers) allow DNNs to express complex data-label relationships since each layer nonlinearly transforms the features and therefore effectively filters the information content.

Given the training data $(\bfy_1 ,\bfc_1), \ldots (\bfy_s ,\bfc_s)$, an inverse problem needs to be solved in order to \emph{train} a given network architecture. This problem, also called the \emph{learning problem}, aims at finding a parameterization of the DNN that explains the data-label relation and generalizes well to new unlabeled data. 
Clearly, using deeper network architectures increases the capacity of the network but also the dimensionality, and thus the computational complexity, of the parameter estimation problem.
Additionally, more labeled data is required to calibrate very deep networks reliably. Therefore, despite the fact that neural networks have been used since the early 70's, deep learning has only recently revolutionized many applications fueled by advances in computational hardware and availability of massive data sets.

Well-known sources of difficulty in deep learning are the dimensionality and non-convexity of the associated optimization problem. 
Traditionally, stochastic gradient descent methods have been used~\cite{RobbinsMonro1951}. It has been observed that the performance of the DNN can be highly dependent on the choice of optimization algorithm and sample size~\cite{byrd2012sample,bottou2016optimization}. Furthermore, it has been noted that some optimization algorithms yield DNNs that generalize poorly to new unlabeled data~\cite{KeskarEtAl2016}.

Additional difficulties in deep learning stem from instabilities of the underlying forward model, most importantly, the propagation of features through the DNN. As has been shown in~\cite{bottou2016optimization}, the output of some networks can be unstable with respect to small perturbations in the original features.
A related problem is the observation of vanishing or exploding gradients~\cite{BengioEtAl1994}.
These results are unsettling since  predictions made by networks with unstable forward propagation are very sensitive to small perturbations of the input features (as is common, e.g., in adversarial attacks), which may render the network useless in practice.

The main goal of this work is to gain new insight into the stability of the forward propagation and the well-posedness of the learning problem summarized in the following two questions:
\begin{enumerate}
\item
Given a network architecture and parameters obtained by some optimization process, is the forward propagation problem well-posed? 
\item Is the learning problem well-posed? In other words, given sufficient training are there parameters such that the DNN generalizes well or can generalization be improved by adding appropriate regularization?
\end{enumerate}

The first question is important because, while it may be possible to fit the training data even for unstable forward propagation models, the trained network is unlikely to generalize. In other words, small deviations in the data, e.g., due to noise, may be drastically amplified by the forward propagation, resulting in incorrect labels.
We show that the forward problem can be thought of as a discretization of an Ordinary
Differential Equation (ODE). Therefore, the stability of the network corresponds to the stability of its underlying ODE. Based on this observation we develop stability criteria for a simplified version of the commonly used Residual Network (ResNet) architecture~\cite{he2016deep} and develop {\em new network architectures} that are ensured to be stable and lead to well-posed learning problems.

The paper is organized as follows. In Section~\ref{sec:mathForm} we give a brief mathematical derivation of the deep learning problem illustrated using the ResNet architecture. 
 In Section~\ref{sec:stability} we analyze the stability of the simplified ResNet forward propagation and the well-posedness of the resulting learning problem.
Our analysis and examples suggest stability criteria, and as a result, in Section~\ref{sec:newArchitectures}, we propose three new stable architectures.
In Section~\ref{sec:optim} we propose regularization functions favoring smoothness of parameters and multi-level ideas to initialize deep networks.
In Section~\ref{sec:experiments} we demonstrate the effectiveness of our new architectures on a number of small-scale model problems and explore their effectiveness for solving image classification problems. Finally, in Section~\ref{sec:summary} we summarize the paper.

\section{Mathematical Formulation of the Deep Learning Problem} 
\label{sec:mathForm}
In this section we briefly describe three main ingredients of  deep learning  relevant to our work; for a comprehensive introduction see, e.g.,~\cite{moller1993scaled,friedman2001elements,abu2012learning,GoodfellowEtAl2016}.
First, we outline \emph{forward propagation} techniques, which transforms the input features in a nonlinear way to filter their information. Second, we describe \emph{classification}, which predicts the class label probabilities using the features at the output layer (i.e., the output of the forward propagation). Finally, we formulate the learning problem, which aims at estimating parameters of the forward propagation and classification that approximate the data-label relation.

For notational convenience we stack the training features and labels row-wise into matrices $\bfY_0 = [\bfy_1, \bfy_2, \ldots, \bfy_s]^\top \in \R^{s \times n}$ and $\bfC = [\bfc_1, \bfc_2, \ldots, \bfc_s]^\top \in \R^{s \times m}$.

To exemplify our discussion of forward propagation we consider a simplified version of the Residual Neural Network (ResNet)~\cite{he2016deep} model that has been very successful in classifying images using deep network architectures; see~\cite{GoodfellowEtAl2016} for other options.
In ResNets, the forward propagation of the input values, $\bfY_0 \in \R^{s \times n}$, through a network consisting of $N$ layers is given by
\begin{equation}
\label{nn}
\bfY_{j+1} =\bfY_j + h \sigma(\bfY_j \bfK_j + b_j)   \quad \text{ for }   \quad j=0,\ldots,N-1.
\end{equation}
The propagation in Eq.~\ref{nn} is parametrized by the nonlinear activation function $\sigma : \R^{s\times n} \to \R^{s \times n}$ and affine transformations represented by their weights, $\bfK_0, \bfK_1, \ldots, \bfK_{N-1} \in \R^{n\times n}$, and biases,  $b_0, b_1, \ldots, b_{N-1} \in \R$.
We augmented the original formulation in~\cite{he2016deep} by the parameter $h>0$ in order to increase the stability of the forward propagation and allow for a continuous interpretation of the process; see also Section~\ref{sec:stability}.
The values  $\bfY_1,\ldots,\bfY_{N-1}$ are also called \emph{hidden} layers and $\bfY_N$ is called the \emph{output} layer.
The activation function is applied element-wise and is typically (piecewise) smooth and monotonically non-decreasing. As two commonly used examples,  we consider the hyperbolic tangent and the Rectified Linear Unit (ReLU) activations
\begin{equation*}
	\sigma_{\rm ht}(\bfY) = \tanh(\bfY) \quad \text{ and } \quad \sigma_{\rm ReLU}(\bfY) = \max(0,\bfY). 
\end{equation*}

The class label probabilities are predicted using the values at the output layers, $\bfY_N$, a \emph{hypothesis function} 
 $\bfh(\bfY_N \bfW +\bfe_s \mu^\top)$, and its associated weights, $\bfW \in \R^{n \times m}$, and bias, $\mu\in\R^m$. Here $\bfe_k \in\R^k$ denotes the $k$-dimensional vector of all ones.  
For Bernoulli variables (i.e., $\bfC \in \{0,1\}^{s\times m}$) it is natural to consider the logistic regression function
\begin{equation}\label{eq:logReg}
	\bfh(\bfx) = \exp(\bfx) ./ (1 + \exp(\bfx)),
\end{equation}
where the exponential and the division are applied element-wise. For Multinomial distributions we use the softmax function
\begin{equation}\label{eq:softmax}
	\bfh(\bfX) = \exp(\bfX) ./ (\exp(\bfX)\bfe_m).
\end{equation}

The learning problem aims at estimating the parameters of the forward propagation (i.e., $\bfK_0, \bfK_1, \ldots, \bfK_{N-1}$ and $b_0, b_1,\ldots, b_{N-1}$) and the classifier ($\bfW$ and $\mu$) so that the DNN accurately approximates the data-label relation for the training data \emph{and} generalizes to new unlabeled data.
As we show below, the learning problem can be cast as a dynamic inverse problem, which provides new opportunities for applying theoretical and computational techniques from parameter estimation to deep learning problems. We phrase learning as an optimization problem
\begin{eqnarray}
	\label{opt}
\min
&&  \frac{1}{s} S(\bfh(\bfY_N \bfW +\bfe_s \mu^\top) ,\bfC) + \alpha R(\bfW,\mu,\bfK_{0,\ldots,N-1}, b_{0,\ldots,N-1})\nonumber\\
\label{st}
{\rm s.t. } &&
\bfY_{j+1} = \bfY_j + h \sigma(\bfY_j \bfK_j + b_j),\quad j=0,1,\ldots,N-1,
\end{eqnarray}
where the loss function $S$ is convex in its first argument and measures the quality of the predicted class label probabilities, the convex regularizer $R$ penalizes undesirable (e.g., highly oscillatory) parameters, and the parameter $\alpha >0$ balances between minimizing the data fit and regularity of the parameters.
A simple example for a loss function is the sum-of-squared difference function $S(\bfC_{\rm pred},\bfC)  = \frac12 \|\bfC_{\rm pred} - \bfC \|^2_F$.  Since our numerical experiments deal with classification we use cross entropy loss functions. 
Choosing an "optimal" regularizer, $R$, and regularization parameter, $\alpha$, is both crucial and nontrivial. 
Commonly Tikhonov regularization~\cite{ehn1,hansen,vogelbook}, also referred to as weight decay, has been used~\cite{goodfellow2009measuring}, although, other possibilities enforcing sparsity or other structure have been proposed~\cite{ng2004feature}.
We introduce novel regularization functions in Section~\ref{sec:optim}. For simplicity, we assume that a suitable value of $\alpha>0$ is chosen by the user or that it is done dynamically as suggested in~\cite{burger2006}. 

There are numerous approaches to solving the learning problem. In this work we use a simple block coordinate descent method to demonstrate the properties of the forward propagation. 
Our method alternates between updating the parameters of the classifier $(\bfW, \mu)$ fixing the current value of the propagated features $\bfY_N$ and then updating the parameters of the forward propagation while keeping the updated weights of the classifier fixed.
The first problem is typically convex and the latter problem is generally non-convex due to the forward propagation process.
Both steps are based on subsampling the training data. To this end note that most common loss functions can be written as a sum over all examples, i.e., 
\begin{eqnarray*}
	\frac1s S(\bfh(\bfY_N \bfW +\bfe_s \mu^\top) ,\bfC)  & =  \frac1s \sum_{i=1}^s S(\bfh(\bfy_i^\top \bfW +\mu^\top) ,\bfc_i^\top) \\
	& \approx \frac{1}{|\mathcal{T}|} \sum_{i\in \mathcal{T}} S(\bfh(\bfy_i^\top \bfW + \mu^\top) ,\bfc_i^\top),
\end{eqnarray*}
where $\mathcal{T} \subset \{1,2,\ldots,s\}$ is a randomly chosen set updated in each iteration of the block coordinate descent method. 
The size of the batches is a parameter in our algorithm whose choice depends on the size and complexity of the problem and resource considerations.
In the following, we assume the sample size is constant; for adaptive selection of sample size see, e.g.,~\cite{byrd2012sample}.
In each iteration of the block coordinate descent scheme, our algorithm approximately solves the resulting classification problem using an Newton-PCG method, i.e., an inexact Newton method that uses a Preconditioned Conjugate Gradient (PCG) method to determine a search direction (see, e.g., \cite{nw,hest,saad2}). 
Subsequently, the weights of the forward propagation are updated using a Gauss-Newton-PCG method. 
Note that gradient and approximated Hessian computations require matrix-vector products with the derivative matrices of the values of the output layer, $\bfY_N$, with respect to $\bfK_{0,\ldots,N-1}$ and $b_{0,\ldots,N-1}$. 
The matrix-vector products with the derivative matrix can be computed without its explicit construction through forward and backward propagation, respectively; see also~\cite{MartensSutskever2012}. However, this requires storing (or recomputing) the values at the output layers; see, for instance, Section~\ref{sub:derVerlet} for derivative computation. 
Therefore, as also suggested in~\cite{ByrdEtAl2011}, we subsample $\mathcal{T}$ further to reduce the cost of the Hessian matrix-vector products in our PCG scheme.

Our implementation includes computing the validation error in each iteration of the block coordinate descent method. The final output of our algorithm are the parameters that achieve the lowest validation error. 

\section{Stability and well-posedness of the forward propagation}
\label{sec:stability}
In this section we analyze the stability of the ResNet forward problem~\ref{nn} and illustrate why some choices of transformation weights may generate instabilities or prohibit effective learning altogether.

It is well-known that any parameter estimation problem requires a well-posed forward problem, i.e., a problem whose output is continuous with respect to its input. 
For example, practical image classification algorithms need to be robust against noisy or slightly shifted input images. Ill-posedness of the forward problem implies that even if the estimated parameters lead to a small training error they will probably fail or will do poorly on a perturbation of that data. In other words a network whose forward propagation is ill-posed will generalize poorly. Thus,  well-posedness of the forward propagation is a necessary condition to obtain DNNs that generalize well.

The following discussion also gives new perspectives on two well-known phenomena in the deep learning community: Vanishing and exploding gradients; see, e.g.,~\cite{BengioEtAl1994}. 
These phenomena refer to the gradient of the objective function in Eq.~\ref{opt} and pose a severe challenge for very deep architectures.
 Note that the gradient represents the sensitivity of the output with respect to a perturbation in the input. Thus, an exploding gradient implies that the output 
  is unstable with respect to the input. Similarly a vanishing gradient implies that the output 
  is insensitive with respect to the input. Clearly both cases prohibit effective training, but more importantly, may not provide DNNs that generalize well.

To understand the phenomena we consider a simplified version of the forward propagation in ResNets given in Eq.~\ref{nn}. As pointed out in \cite{haber2017learning} the forward propagation can be seen as an explicit Euler discretization of the nonlinear Ordinary Differential Equation (ODE)
\begin{equation}
\label{ode}
\dot \bfy(t) = \sigma(\bfK^{\top}(t) \bfy(t) + b(t)), \quad \text{ with } \quad \bfy(0) = \bfy_0,
\end{equation}
over a time interval $t = [0,T]$. The final time $T>0$ and the magnitude of $\bfK(t)$ control the depth of the network.
The ODE is stable if $\bfK(t)$ is changing sufficiently slow and 
\begin{equation}\label{eq:condK1}
	\max_{i=1,2,\ldots,n} Re(\lambda_i (\bfJ(t)))\le 0, \quad \forall t \in [0, T],
\end{equation}
where $Re(\cdot)$ denotes the real part, $\lambda_i(\bfJ(t))$ is the $i$th eigenvalue of the Jacobian of the right hand side in Eq.~\ref{ode}, denoted by  $\bfJ(t) \in \R^{n \times n}$. 
A more accurate statement that uses kinematic eigenvalues of the Jacobian can be found in \cite{amr}. Here, the Jacobian is
\begin{eqnarray}
	\bfJ(t) &= \left(\nabla_{\bfy}\left( \sigma(\bfK(t)^\top \bfy + b(t)) \right)\right)^\top \nonumber \\
	       & = {\rm diag}\left(\sigma'(\bfK(t)^\top \bfy + b(t))\right)\, \bfK(t)^\top.	\label{eq:eigJ}
\end{eqnarray}
Since the activation function $\sigma$ is typically monotonically non-decreasing, i.e., $\sigma'(\cdot) \geq 0$, Eq.~\ref{eq:condK1} is satisfied if $\bfK$ changes sufficiently slowly and 
\begin{equation}
\label{condK}
\max_{i=1,2,\ldots,n} Re(\lambda_i (\bfK(t))) \le 0,  \quad \forall t \in [0,T].
\end{equation}
Controlling the smoothness of $\bfK$ can be done by regularization as described in Section~\ref{sec:optim}.
To ensure the stability of the overall discrete forward propagation, we also require the discrete version of the ODE to have a sufficiently small $h$ as summarized in the following well-known lemma.
\begin{lemma}[Stability of Forward Euler Method]
The forward propagation in Eq.~\ref{nn} is stable if
\begin{equation}
\label{dstab}
 \max_{i=1,2,\ldots,n} | 1+ h  \lambda_i(\bfJ_j)|   \le 1, \quad \forall j = 0,1,\ldots, N-1.
 \end{equation}
\end{lemma}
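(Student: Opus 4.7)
The plan is to treat the iteration in Eq.~\ref{nn} as the forward Euler scheme applied to the ODE in Eq.~\ref{ode} and perform the classical linearized stability analysis about a given trajectory $\bfY_0,\bfY_1,\ldots,\bfY_N$. I would start by introducing a small perturbation $\delta\bfY_0$ of the input and tracking the induced perturbation $\delta\bfY_j$ at each layer. Expanding $\sigma(\cdot)$ to first order about $\bfY_j\bfK_j+b_j$ and collecting terms gives the linearized recursion
\begin{equation*}
\delta\bfY_{j+1} = \delta\bfY_j + h\,\bfJ_j\,\delta\bfY_j + \mathcal{O}(\|\delta\bfY_j\|^2)
= (\bfI + h\bfJ_j)\,\delta\bfY_j + \mathcal{O}(\|\delta\bfY_j\|^2),
\end{equation*}
where $\bfJ_j$ is the Jacobian in Eq.~\ref{eq:eigJ} evaluated at layer $j$. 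The first step is therefore just a careful Taylor expansion; once this recursion is in hand, stability of the discrete propagation is equivalent to requiring that the perturbation does not amplify, i.e., $\|\delta\bfY_{j+1}\|\le \|\delta\bfY_j\|$ for all $j$.

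Next I would argue that this boundedness of the amplification is controlled by the spectral radius of the amplification matrix $\bfI + h\bfJ_j$. Dropping the higher-order terms (standard in linearized stability, justified for sufficiently small $\|\delta\bfY_0\|$), iterating the recursion gives
\begin{equation*}
\delta\bfY_N = \Bigl(\prod_{j=0}^{N-1}(\bfI + h\bfJ_j)\Bigr)\delta\bfY_0,
\end{equation*}
so a sufficient condition for non-amplification is $\rho(\bfI + h\bfJ_j)\le 1$ for every $j$. Since the eigenvalues of $\bfI + h\bfJ_j$ are precisely $1 + h\lambda_i(\bfJ_j)$, this rewrites immediately as the stated condition~\eqref{dstab}.

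The main technical nuisance, rather than the main obstacle, is bookkeeping the fact that $\bfY_j$ is a matrix in $\R^{s\times n}$, so $\bfJ_j$ is really the Jacobian of a vectorized map and one should be careful about which spectrum actually governs the norm growth. Here, because $\sigma$ acts element-wise and the linear map $\bfY\mapsto \bfY\bfK_j$ acts on the right, the amplification decouples row-wise and the relevant spectrum is indeed that of the $n\times n$ Jacobian in Eq.~\ref{eq:eigJ}; I would state this decoupling explicitly before invoking the scalar bound on $|1+h\lambda_i(\bfJ_j)|$. The genuinely hard part is the implicit dependence of $\bfJ_j$ on the trajectory itself: one is only proving \emph{linearized} stability, and a fully rigorous nonlinear statement would require the discrete analogue of slowly-varying $\bfK(t)$ together with a smallness assumption on $\|\delta\bfY_0\|$, which I would flag but not attempt to quantify, since the lemma is stated in the classical linearized sense.
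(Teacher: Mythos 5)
Your argument is the classical linearized amplification-factor analysis, which is exactly what the paper relies on: the paper gives no proof of its own and simply cites standard ODE texts (Ascher; Ascher--Petzold) for the forward Euler stability criterion, and your derivation of the recursion $\delta\bfY_{j+1}=(\bfI+h\bfJ_j)\,\delta\bfY_j$ together with the identification of the eigenvalues $1+h\lambda_i(\bfJ_j)$ is precisely the content of that citation. Your closing caveats --- that $\rho(\bfI+h\bfJ_j)\le 1$ controls norm growth only up to the conditioning of the eigenbasis, that the statement is linearized rather than fully nonlinear, and that one implicitly needs slowly varying weights --- are the same qualifications the paper itself acknowledges informally via its remark on kinematic eigenvalues, so the proposal matches the intended proof in both substance and level of rigor.
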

\begin{proof}
See, e.g.,~\cite{ascherBook, apbook} for the proof of stability criteria for the forward Euler method.
\end{proof}

The above discussion suggests that the stability of the continuous forward propagation in Eq.~\ref{condK} 
and its discrete analog Eq.~\ref{dstab} 
need to be added to the optimization problem~\ref{opt} as constraints.
Otherwise, one may obtain some transformation weights, $\bfK_0, \bfK_1, \ldots, \bfK_{N-1}$, that may fit the  training data but generate an unstable process.
As discussed above these solutions cannot be expected to generalize well for other data.

We illustrate the stability issues in ResNet using a simple example. 
\begin{example}[Stability of ResNet]
	\label{exam1}
	For $s=3$ and $n=2$ we consider the forward propagation through a ResNet as given by Eq.~\ref{nn}.
	We consider three networks consisting of $N=10$ identical layers, i.e., on each layer we use the activation $\sigma_{\rm ht} = \tanh$, $h=0.1$, $b=0$, and a constant weight matrix. To illustrate the impact of the eigenvalues of the weight matrix on the propagation, we consider three ResNets parameterized by
	\begin{equation}\label{eq:KernelEx}
		\bfK_+ = \left( 
		\begin{array}{rr}
			 2 & -2 \\ 0 & 2
		\end{array}
		\right), \; \bfK_{-} = \left( \begin{array}{rr}
			 -2 & 0 \\ 2 & -2
		\end{array}	\right),
		 \;
		\bfK_{0} = \left( \begin{array}{rr}
			 0 & -1 \\ 1 & 0
		\end{array}
		\right),
	\end{equation}
	where $\lambda_1(\bfK_+) = \lambda_2(\bfK_+) = 2, \ \  \lambda_1(\bfK_{-}) =  \lambda_2(\bfK_{-}) =-2$ and $\lambda_1(\bfK_0) =\imath, \ \ \lambda_2(\bfK_0) = -\imath$.
	We consider the feature vectors $\bfy_1 = [0.1,0.1]^{\top},\ \bfy_2 = -\bfy_1,\ \bfy_3=[0, 0.5]^\top$. 
	After propagating the features through the layers, we illustrate the different propagations in Figure~\ref{fig1}. 
	We represent the values at the hidden layers as colored lines in the 2D plane where each color is associated with one feature vector. 
	To highlight the differences in the dynamics in all three cases, we also depict the force field using black arrows in the background. 
	This plot is often referred to as the \emph{phase plane diagram}.
	\begin{figure}
		\begin{center}
			\includegraphics[width=1\textwidth]{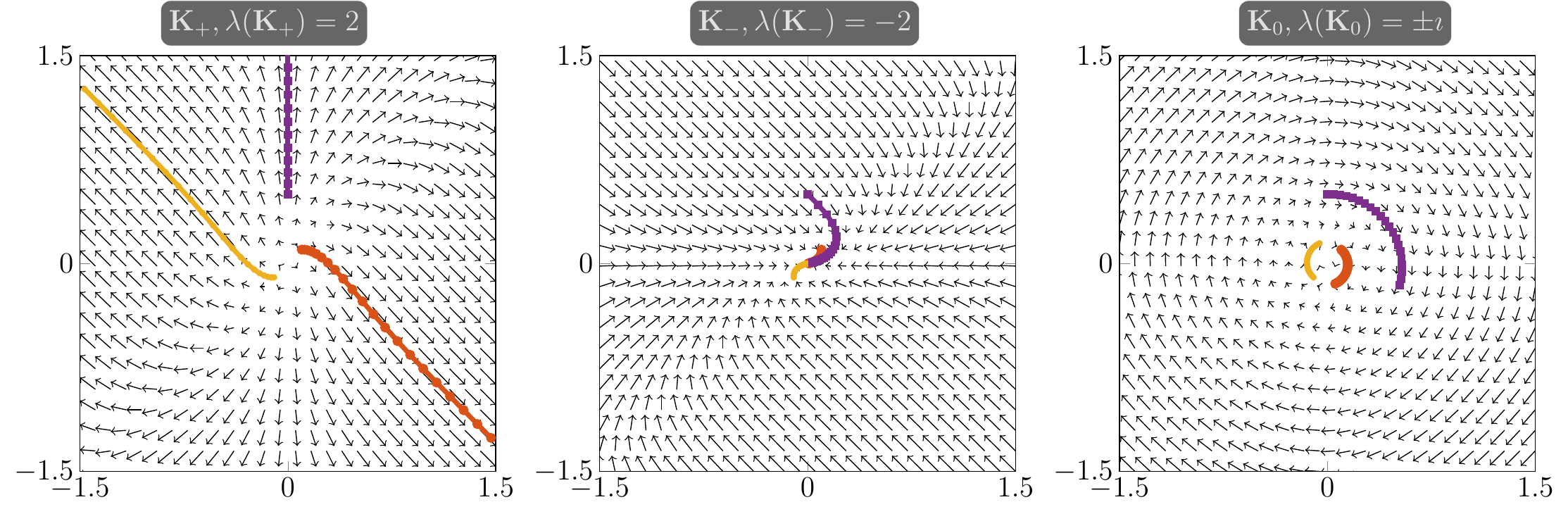}
		\end{center}
	\caption{Phase plane diagrams for ResNets with $N=10$ identical layers parameterized by the weight matrices in Eq.~\ref{eq:KernelEx} starting from three different input features. The values at hidden layers are indicated by colored lines and arrows depict the force field. Left: Due to its positive eigenvalues, the features diverge using $\bfK_+$ leading to an unstable forward propagation. Center: $\bfK_-$ yields a contraction that annihilates differences in the features and renders the learning problem ill-posed. Right: The antisymmetric matrix $\bfK_{0}$ leads to rotations, preserves distances between the features, and yields well-posed forward propagation and learning.  \label{fig1}}
	\end{figure}
	
	As can be seen in left subplot, the features diverge away from the origin and each other using $\bfK_+$. 
	Note that $\bfy_1$ and $\bfy_2$, which are close together initially, depart into opposite directions. This clearly suggests an unstable forward propagation that cannot be expected to generalize well. 
	In contrast to that, the center subplot shows that $\bfK_{-}$ yields  an accumulation point at the origin. 
	While the forward propagation satisfies Eq.~\ref{condK} and is thus stable, the learning problem, which requires inversion of this process, is ill-posed.
	Finally, the antisymmetric matrix $\bfK_0$ yields a rotation in the feature space, which preserves the distances between the features and leads to a stable forward propagation and a well-posed learning problem.
	
	Clearly the effects depicted here are more pronounced in deeper networks with more layers and/or larger values for $h$.
\end{example}

While the first case in Example~\ref{exam1} indicates that Eq.~\ref{condK} is a necessary condition for successful learning, the second case suggests that it is not sufficient.
In general, when $Re(\lambda(\bfK(t)))<0$ for most times and the network is deep (e.g., long time integration) differences in the initial feature vectors  decay. 
In other words, for all initial conditions $\bfy_0$ we have that $\bfy(t) \rightarrow 0$ as $t \to \infty$; compare with center plot in Figure~\ref{fig1}.
Hence, even though the forward problem is stable the inverse problem is highly ill-posed as it comparable to an inverse heat equation.
In these situations, the gradients of the objective function in Eq.~\ref{opt} will vanish since  small changes in the kernels will have no noticeable impact on the values of the outputs.

\bigskip

If we are inspired from the propagation of signals through neurons, then  a ResNet as in Eq.~\ref{st} with
$\max_i Re\left(\lambda_i (\bfK) \right) > 0$ for all $i$ consists of neurons that amplify the signal with no upper bound (which is not biological) and a ResNet with 
   $\max_i Re\left(\lambda_i (\bfK) \right)\ll 0$ can be seen as a lossy network.
   A moderately lossy network may be advantageous when the input is noisy, since it tends to decay high order oscillations. However, having too much signal loss is clearly harmful as it also annihilates relevant differences in the input features.

In summary, our discussion of ResNets illustrates that stable forward propagation and well-posed learning problems can be obtained for deep networks when
\begin{equation}\label{Kapprox0}
Re\left(\lambda_i (\bfK(t)) \right) \approx 0, \quad \forall i=1,2,\ldots,n, \; t \in [0,T].	
\end{equation}
In this case, the forward propagation causes only moderate amplification or loss and thus even deep networks preserve features in the input data and allow for effective learning.

\section{Stable Forward Propagation for DNNs}
\label{sec:newArchitectures}
Motivated by the discussion in the previous section, we introduce three new forward propagation methods that are stable for arbitrarily deep neural networks and lead to well-posed learning problems.
Our approaches, presented in Section~\ref{sub:antiSym} and \ref{sub:Hamiltonian}, use different means to enforce Jacobians whose eigenvalues have very small real part; see Eq.~\ref{Kapprox0}. The methods in Section~\ref{sub:Hamiltonian} are inspired by Hamiltonian systems and we propose leapfrog and Verlet integration techniques for forward propagation in Section~\ref{sub:symplectic}. Finally, we compute the derivative of the Verlet method using back propagation~\cite{williams1986learning} and discuss the relation between back propagation and the older and more general adjoint method~\cite{kalman1960contributions} in Section~\ref{sub:derVerlet}.

\subsection{Antisymmetric Weight Matrices}
\label{sub:antiSym}
Perhaps the simplest way to obtain a stable forward propagation is to construct force fields whose Jacobians are antisymmetric.
For example, consider the forward propagation
\begin{equation}
\label{antisymdisc}
\bfY_{j+1} = \bfY_j + h \sigma \left(\hf \bfY_j \left(\bfK_j - \bfK_j^{\top} - \gamma \bfI\right) + b_j\right), \; j=0,\ldots,N-1,
\end{equation}
where $\gamma\geq0$ is a small constant and $\bfI$ is an identity matrix. The resulting forward propagation can be seen as a forward Euler discretization of the ODE
\begin{equation*}
\dot{\bfy}(t) = \sigma\left(\hf (\bfK(t) - \bfK(t)^{\top} - \gamma \bfI) \bfy(t) + b(t)\right), \quad \forall t \in [0,T].	
\end{equation*}
Since $\bfK(t) - \bfK(t)^{\top}$ is anti-symmetric its eigenvalues are imaginary, which also holds for the Jacobian in Eq.~\ref{eq:eigJ}. Thus, the continuous ResNet parameterized by the antisymmetric weight matrix is stable and preserves information given an appropriate integration technique and sufficiently small time steps.

To stabilize the discrete forward propagation, which is based on the forward Euler method, we have added diffusion to the system in Eq.~\ref{antisymdisc}. The amount of diffusion depends on the parameter $\gamma\geq0$.  While small values of $\gamma$ might improve the robustness against noise in the feature vectors, too large values can render the learning problem ill-posed. Alternatively, more advanced time integration methods can be used with $\gamma=0$, i.e., without adding diffusion.

\subsection{Hamiltonian Inspired Neural Networks}
\label{sub:Hamiltonian}

Restricting the parameter space to antisymmetric kernels is only one way to obtain a stable forward propagation.
Alternatively we can recast forward propagation as a Hamiltonian system, which has the structure
\begin{equation*}
\dot \bfy(t) = - \nabla_{\bfz} H(\bfy,\bfz,t) \quad \text{ and } \quad
\dot \bfz(t) =  \nabla_{\bfy} H(\bfy,\bfz,t), \quad \forall t \in [0,T].
\end{equation*}
The function $H : \R^n \times \R^n \times [0,T] \to \R$ is the Hamiltonian. In our application  Hamiltonian systems for forward propagation are attractive due to their property to conserve rather than increase or dissipate the energy of the system; see~\cite[Ch.6]{ascherBook} for a general introduction. 

The Hamiltonian function $H$ measures the energy of the system, which in the autonomous case gets conserved. A simple approach can be derived from the Hamiltonian
$$ H(\bfy,\bfz) = \hf \bfz^{\top}\bfz + f(\bfy), $$
where $f : \R^n \to \R$ is at least twice continuously differentiable.
This leads to the ODE system 
$$ \dot \bfz(t) = -\nabla_{\bfy} f(\bfy(t)), \quad \dot \bfy(t) = \bfz(t), \quad \forall t \in [0,T]. $$
Eliminating $\bfz$ we obtain a second order system
$$ \ddot \bfy(t)= \nabla_{\bfy} f(\bfy(t)), \quad \forall t \in [0,T].$$

Inspired by the continuous version of the ResNet forward propagation given in Eq.~\ref{ode} we propose to use the following second order ODE
\begin{equation}
\label{secOrder}
\ddot \bfy(t)  = \sigma(\bfK^{\top}(t)\bfy(t) + b(t)), \quad \bfy(0) = \bfy_0, \quad \dot{\bfy}(0) = \dot{\bfy}_0,
\end{equation}
where in the following we assume that $\dot{\bfy}_0 = 0$.
The dynamical system in Eq.~\ref{secOrder} is stable for all weight matrices with non-positive real eigenvalues, i.e., that satisfy Eq.~\ref{condK}.
This constraint can either be added to the optimization problem~\ref{opt} or, similar to the previous section, be enforced by design. The latter approach can be obtained, e.g., using the parametrization
\begin{equation}
\label{negker}
\bfK(\bfC) = -\bfC^{\top}\bfC, \quad \text{ for some }\quad \bfC \in \R^{n\times n}.
\end{equation}
Note that in contrast to the antisymmetric model in Eq.~\ref{antisymdisc}, the negative definite model is based on a nonlinear parametrization which might lead to a more challenging optimization problem. Alternatively Eq.~\ref{condK} can be added as a constraint to the optimization problem~\ref{opt} requiring expensive eigenvalue computations.

\bigskip

The forward propagations Eqs.~\ref{nn}, \ref{antisymdisc}, or \ref{secOrder} require additional constraints on
$\bfK$ and are limited to square weight matrices. We therefore propose a network that is {\em intrinsically} stable and supports non-square weight matrices, i.e., 
a network whose forward propagation is stable independent of the choice of the weights. To this end, we introduce symmetry into the Hamiltonian system by defining
\begin{equation}
\label{hamiltonian}
\dot \bfy(t) = \sigma(\bfK(t) \bfz(t) + b(t))  \; \text{ and } \;
\dot \bfz(t) = -\sigma(\bfK(t)^{\top} \bfy(t) + b(t)).
\end{equation}
Note that Eq.~\ref{hamiltonian} can be seen as a special case of ResNet with an augmented variable $\bfz \in \R^m$ and the associated ODE
$$
 {\frac {\partial}{\partial t}} 
\left(\begin{array}{r} \bfy \\ \bfz \end{array}\right)(t) =  \sigma \left( 
\left(\begin{array}{rr} 
	0                & \bfK(t) \\ 
	-\bfK(t)^{\top}  & 0  \\
	\end{array} \right)
	\left(\begin{array}{r}
		 \bfy \\ \bfz 
		\end{array}\right) (t) + b(t) \right), \; \left(\begin{array}{r} \bfy \\ \bfz \end{array}\right)(0) = \left(\begin{array}{r}\bfy_0 \\ 0
\end{array}\right).
$$
This system is stable regardless of the spectrum or size of the matrices $\bfK(t)$ since the overall matrix in the linear transformation is antisymmetric. To ensure stability of the discrete version the time step size needs to be sufficiently small; see Lemma~\ref{dstab}.

\subsection{Symplectic Forward Propagation}
\label{sub:symplectic}
In this section we use symplectic integration techniques for solving the discrete versions of the Hamiltonian-inspired networks in Eq.~\ref{secOrder} and Eq.~\ref{hamiltonian}.
Symplectic methods have been shown to capture the long time features of Hamiltonian systems and thus provide a slightly different approach as compared to the forward Euler method used in ResNet~\ref{st}; we refer to~\cite[Ch.6]{ascherBook} for a detailed discussion.
For the second-order ODE in Eq.~\ref{secOrder} we recommend the conservative leapfrog discretization 
\begin{equation}
\label{LF}
\bfy_{j+1} = 
\left\{\begin{array}{ll}
	2\bfy_j  + h^2\sigma(\bfK_j \bfy_j + b_j), &  j=0\\
	2\bfy_j - \bfy_{j-1} + h^2\sigma(\bfK_j \bfy_j + b_j), &    j=1,2,\ldots,N-1
\end{array}\right..
\end{equation}  
To discretize the augmented network in Eq.~\ref{hamiltonian} we use a Verlet integration where for $j=0,1,\ldots, N-1$ we have
\begin{equation}
\label{varlet}
\bfz_{j+\hf} = \bfz_{j-\hf} - h \sigma(\bfK^{\top}_j \bfy_j + b_j) \; \text{ and } \;
 \bfy_{j+1} = \bfy_j + h \sigma(\bfK_{j} \bfz_{j+\hf} + b_{j}).
\end{equation}
Since both discretizations  are \emph{symplectic}, the respective Hamiltonians are preserved if the transformation weights are time invariant and the step size is sufficiently small.

We demonstrate the long time dynamics of the two new Hamiltonian-inspired forward propagation methods using a simple  example with two-dimensional features and identical layers.
\begin{example}
	Let $s = 2$ and $n=2$ and consider the features $\bfy_1=[0.1,0.1]^\top$ and $\bfy_2=[-0.1, -0.1]^\top$. 
	We consider networks with identical layers featuring $b=0$ and hyperbolic tangent as activation function. 
	For the leapfrog integration we use the matrix $\bfK = \bfK_-$ defined in Eq.~\ref{eq:KernelEx} (recall that $\lambda_1(\bfK) = \lambda_2(\bfK) = -2$) and a step size of $h=1$. To illustrate the Verlet integration (which can handle non-square kernels) we use a time step size of $h=0.1$ and
	\begin{equation*}
	\bfK_v = \left( \begin{array}{rr}
		2 &  1 \\ -1  &  2 \\ 0 & 1
	\end{array}
	\right).
	\end{equation*}
	To expose the short term behavior we use $N=500$ layers and to demonstrate the non-trivial long term characteristics we use $N=5,000$ layers. 
	We depict the phase plane diagrams for both networks and both depths in  Figure~\ref{fig2}.
	Even though we use identical layers with constant kernels the features neither explode nor vanish asymptotically.  
	\begin{figure}
	\begin{center}
		\includegraphics[width=\textwidth]{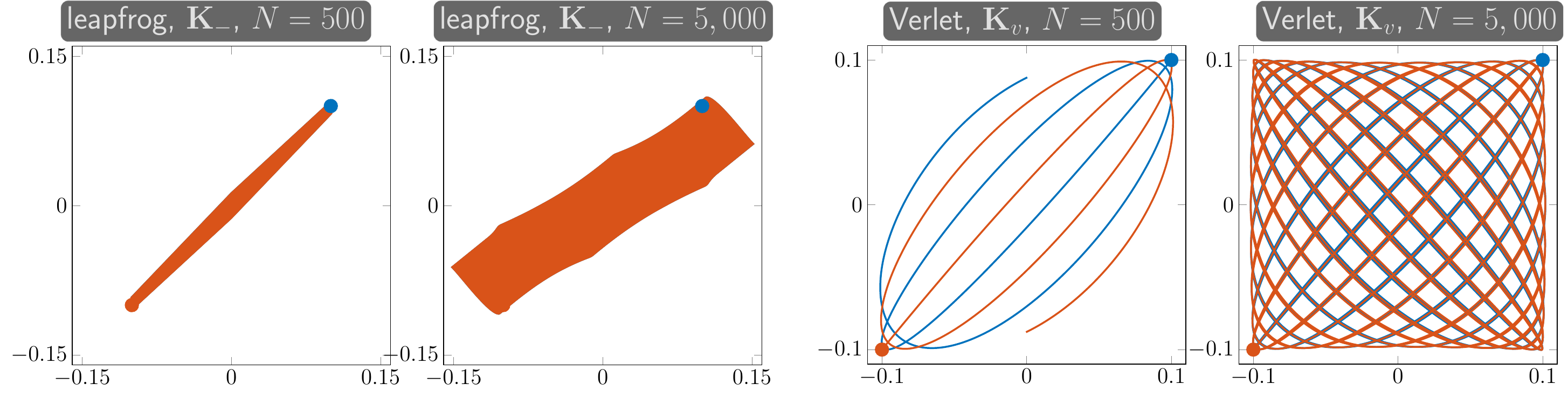}
	\end{center}
	\caption{Phase space diagrams for the forward propagation using the leapfrog and Verlet methods starting from $\bfy_1= [0.1,0.1]^\top$ (blue) and $\bfy_2=[-0.1,-0.1]^\top$ (red). For each network we show the short term ($N=500$) and long term ($N=5,000$) behavior for identical layers. In both cases non-trivial behavior can be observed even for constant weight matrices. (This figure is optimized for screen use.) \label{fig2}}
	\end{figure}
	\end{example}
We proposed three new ways of forward propagation to ensure stability for arbitrary numbers of layers. 
Among the three approaches, the Verlet method is the most flexible as it can handle non-square weighting matrices and does not require additional constraints on $\bfK$.

\subsection{Derivatives of the Verlet Method}
\label{sub:derVerlet}
We now discuss the computation of derivatives for the Verlet method. The derivatives of the ResNet are standard
and can be found, e.g., in~\cite{he2016identity}. Given the ResNet derivatives the antisymmetric model can be differentiated using chain rule and the time stepping. The leapfrog method can be treated in a similar way to usual ResNets and is therefore omitted.

Our presentation follows the standard approach used in machine learning known as \emph{ back propagation }\cite{he2016identity}.  However, we note that back propagation is a special case of the \emph{adjoint method}~\cite{bliss1919use} used in time-dependent optimal control; see, e.g.,\cite{CaoEtAl2003, BorzSchulz2012}.  The adjoint method is more general than back propagation as it can also be used compute the gradient of less ``standard'' time stepping methods. Thus, we discuss the back propagation method in the context of the adjoint method as applied to the Verlet method. 
We start by the usual sensitivity equation and then discuss how to efficiently compute matrix-vector products.

Exemplarily we show how to differentiate the values at the output layer in Eq.~\ref{varlet} with respect to the weight matrix $\bfK_k$ at an arbitrary layer $k$. For simplicity we assume that $\bfK$ is square.  Derivatives for non-square weight matrices and with respect to the bias, $b_k$, can be computed along the same lines. Clearly, the values of $\bfz_j$ and $\bfy_j$ at the hidden layers $j \leq k$ are independent of $\bfK_k$.  
Applying the chain rule to Eq.~\ref{varlet} we see that 
\begin{eqnarray}
		{\frac {\partial \bfz_{k+\hf}}{\partial \bfK_k}} &= h \diag\left(\sigma'(\bfK_k \bfy_k + b_k)\right) (\bfy_k \otimes \bfI) =:  \bfC_1  \label{dvarletjc1}
		\\
		{\frac {\partial   \bfy_{k+1}}{\partial \bfK_k}} &= - h \diag\left(\sigma'(-\bfK_{k}^\top \bfz_{k+\hf} + b_{k})\right)  (\bfI \otimes \bfZ_{k+\hf} + \bfK_k^\top \otimes \bfI)\nonumber \\
		&=:   \bfC_2, \label{dvarletjc2}
 \end{eqnarray}
where $\otimes$ denotes the Kronecker product and $\bfI$ is the $n\times n$ identity matrix.
We can now differentiate layer by layer, where for $k < j < N-1$ we obtain
\begin{eqnarray}
\label{dvarlet}
	{\frac {\partial \bfz_{j+\hf}}{\partial \bfK_k}} =&{\frac {\partial  \bfz_{j-\hf}}{\partial \bfK_k}} + h{\rm diag} \left( \sigma'(\bfK_j \bfy_j + b_j) \right) \bfK_j {\frac {\partial \bfy_{j}}{\partial \bfK_k}} \\
	{\frac {\partial   \bfy_{j+1}}{\partial \bfK_k}} =& {\frac {\partial \bfy_j}{\partial \bfK_k}} - h {\rm diag} \left(\sigma'(-\bfK_{j}^\top \bfz_{j+\hf} + b_{j}) \right) \bfK_j^\top {\frac {\partial  \bfz_{j+\hf}}{\partial \bfK_k}}.
 \end{eqnarray}
Combining equations~\ref{dvarletjc1}, \ref{dvarletjc2}, and~\ref{dvarlet} the derivatives can be written compactly as a block linear system
\begin{equation}
\label{bigsystem}
\hspace{-15mm}
\left(\begin{array}{rrrrrrr}
 \bfI          &               &               &            &               &         &        \\
               &  \bfI         &               &            &               &         &        \\
   -\bfI       &   \bfB_{k+1}  & \bfI          &            &               &         &        \\
               &   -\bfI       & \bfA_{k+1}    &  \bfI      &               &         &        \\
               &               &  \ddots       & \ddots     & \ddots        &         &      \\
               &               &               & -\bfI      & \bfA_N        & \bfI    &        \\
               &               &               &            & -\bfI         & \bfB_N  & \bfI     \\
\end{array}\right)
\left(\begin{array}{r}
{\frac {\partial \bfz_{k+\hf}}{\partial \bfK_k}} \\
{\frac {\partial \bfy_{k+1}}{\partial \bfK_k}} \\
\vdots \\
 \\
{\frac {\partial \bfz_{N+\hf}}{\partial \bfK_k}} \\
{\frac {\partial \bfy_{N+1}}{\partial \bfK_k}} \\
\end{array}\right) = 
\left(
\begin{array}{c}
\bfC_1\\
\bfC_2\\
   0  \\  
  \vdots \\
     \\
     \\
     0
\end{array} \right), 
\end{equation}
where 
\begin{eqnarray*}
\bfB_j & = -h  \diag\left( \sigma'(\bfK_j \bfy_j + b_j) \right) \bfK_j,\\
\bfA_j & = h \diag\left( \sigma'(-\bfK_j^{\top} \bfz_{j+\hf} + b_j) \right) \bfK_j^{\top}. 
\end{eqnarray*}

 The linear system in Eq.~\ref{bigsystem} is block triangular with  identity matrices on its diagonal and thus can be solved  explicitly using forward substitution.
 Such systems commonly arise in optimal control of time-dependent PDEs and in dynamic inverse problems. For more details
 involving notes on implementation see, e.g.,~\cite{rothauge2015numerical}.

In the optimal control literature the matrices ${\frac {\partial \bfz_{j+\hf}}{\partial \bfK_k}}$ and ${\frac {\partial \bfy_j}{\partial \bfK_k}}$ are often
referred to as the sensitivity matrices. While the matrices can be dense and large, computing matrix-vector
products can be done efficiently  by forward propagation. To this end, the right hand side 
of Eq.~\ref{bigsystem} is multiplied by the vector and the linear system is then solved
with a vector right hand side. The multiplication with the transpose, also known as the adjoint
method, is done by solving the equation backwards. It is important to note that the back-propagation algorithm
\cite{williams1986learning}
is nothing but a particular implementation of the adjoint method discussed much earlier in \cite{kalman1960contributions}.

\section{Regularization}
\label{sec:optim}
In this section we present derivative-based regularization functions and a multi-level learning approach  to ensure smoothness of parameters in deep learning. 
By biasing the learning process towards smooth time dynamics we aim at improving the stability of the forward propagation (e.g., to ensure that $\bfK(t)$ changes sufficiently slow as assumed in Sec.~\ref{sec:stability}) and ultimately the generalization.
Intuitively, we cannot expect the network to generalize well in the absence of smoothness.
While  the presented regularization techniques are commonly employed in other areas of inverse problems~\cite{hansen,vogelbook}, e.g., imaging their application to deep learning is, to the best of our knowledge, rather novel.

\subsection{Regularizing the Forward Propagation}\label{sub:timeReg}
The perhaps most common regularization strategy 
used in deep learning is referred to as \emph{ weight decay}, which is equivalent to standard Tikhonov regularization
of the form
$$ R(\bfK) = \hf \|\bfK\|^2_F,$$
where $\|\cdot\|_F$ denotes the Frobenius norm; see, e.g.,~\cite{GoodfellowEtAl2016}.
While this regularization reduces the magnitude of the weights, it is not sensitive to rapidly changing weights between adjacent layers.
To illustrate why this may be problematic, consider removing a single layer from a deep network. Since the network is deep, we should not expect large changes in the values of the output layer and thus similar classification errors. However, if this layer performs, e.g., a 90 degree rotation of the features while the adjacent layers keep the features unchanged, the effect will be dramatic. 

Our interpretation of forward propagation as a system of non-autonomous ODEs and the stability analysis in Sec. \ref{sec:stability} motivate that
$\bfK$ should be smooth, or at least piecewise smooth in time. To this end we propose the
following new regularization for the transformation weights
\begin{equation}
\label{regKb}
R(\bfK) = \frac{1}{2h} \sum \|\bfK_j - \bfK_{j-1} \|^2_F \; {\rm and}\; R(b) = \frac{1}{2h} \sum (b_j - b_{j-1} )^2.
\end{equation}
This regularization favors weights that vary smoothly between adjacent layers. Furthermore, as we see in our numerical experiments,
the regularization adds robustness to the process. We can easily add or subtract steps without significantly changing the final result, thus adding more generalizing power to our network.

\subsection{Regularizing the Classification Weights}

We propose using smoothness regularization on the 
classification weights for image classification problems, e.g., in Convolution Neural Networks (CNN); see; e.g.,~\cite[Ch.9]{GoodfellowEtAl2016}. For motivation, let the examples in $\bfY_0$ represent
vectorized $n_1 \times n_2 \times n_3$ images. The network propagates the images in time and generates perturbed
images of the same size.
The hypothesis function predicts the class label probabilities based on affinely transformed output images, i.e.,
$$ \bfC^{\rm pred} =  \bfh (\bfY_N \bfW +\bfe_s \mu^\top). $$
The number of rows in $\bfW$ is $n = n_1 \cdot n_2 \cdot n_3$ and the operation $\bfY_N(j,:)^{\top} \bfW(:,k)$
is a dot product between the $j$th output image and the classification weights for the $k$th class. Noting that the rows of $\bfY_N$ and the columns of $\bfW$ can be interpreted as discrete images sets the stage for developing regularization approaches commonly used in image processing; see, e.g.,~\cite{ChanShen2010}.

To further motivate the importance of regularization, note that if the number of examples is much larger than the number of features (pixels in the image) and if there is no significant redundancy, finding the optimal $\bfW$ given  $\bfY_N$ and $\bfC$ is an over-determined problem.
Otherwise the problem is ill-posed and there may be infinitely many weights that yield the same classification on the observed data. 
The risk in both cases is \emph{overfitting} since the optimal classification weights can be highly irregular and generalize poorly. Thus, one way to enforce uniqueness and improve generalization is to regularize the weights.

A standard approach in machine learning also known as \emph{pooling} aims at achieving this goal; see, e.g.,~\cite{GoodfellowEtAl2016}.
The main idea of pooling is to coarsen the output images and thereby to decrease the dimensionality of the classification problem.
The simplest approach is skipping, i.e., subsampling the image $\bfY_N$, e.g., at every other pixel.
Other common options are average pooling and max-pooling, where image patches are represented by their average and maximum value, respectively.
From an inverse problems perspective, pooling can be thought of as a subspace regularization~\cite{hansen}. For example, average pooling is similar to requiring $\bfW$ to be constant over each image patch.

An alternative approach to regularization is to interpret the $j$th feature, $\bfy_j \in \R^n$, and the $k$th classification weight, $\bfw_k \in \R^n$, of the CNN as discretizations of  image functions $y_j : \Omega \to \R$ and  $w_k : \Omega \to \R$, respectively, where $\Omega \subset \R^2$ is the image domain. Assuming $y_j$ is sufficiently regular (which is to be enforced by the regularization) we can see that the probability of the $j$th example belonging to the $k$th class can be viewed as
\begin{equation}
 \bfh(\bfy_j^{\top} \bfw_k + \mu_k) \approx \bfh \left( {\rm vol}(\Omega) \int_{\Omega} y(x) w(x)\, dx + \mu_k \right), 	
\end{equation}
where ${\rm vol} (\Omega)$ denotes the volume of the image domain.
To obtain weights that are insensitive to small displacements of the images it is reasonable
to favor spatially smooth parameters by using regularization of the form
\begin{equation}
\label{regW}
R(\bfw_k) = \hf \|\bfL \bfw_k \|^2,
\end{equation}
where $\bfL$ is a discretized differential operator. This regularization also embeds the optimal $w_k$ into a suitable function space.  For example, using an image gradient ensures that $w_k$ is in the Sobolev space $H^1(\Omega,\R)$~\cite{EvansPDE}. Most importantly for our application at hand, derivative-based regularization yields smooth classification weights that, as we see next, can be interpreted by visual inspection. 

\subsection{Multi-level Learning} 
\label{sub:multilevel_learning}
As another means of regularizing the problem, we exploit a multi-level learning strategy that gradually increases the number of layers in the network. Our idea is based on the continuous interpretation of the forward propagation in which the number of layers in the network corresponds to the number of discretization points. Our idea is closely related to cascadic multigrid methods~\cite{BornemannDeuflhard1996} and ideas in image processing where multi-level strategies are commonly used to decrease the risk of being trapped in local minima; see, e.g.,~\cite{ModSiamBook}. More details about multi-level methods in learning can be found in~\cite{haber2017learning}.

The basic idea is to first solve the learning problem using a network with only a few layers and then prolongate the estimated weights of the forward propagation to a network with twice as many layers. The prolongated weights are then used to initialize the optimization problem on the finer level. We repeat this process until a user-specified maximum number of layers is reached. 

Besides realizing some obvious computational savings arising from the reduced size of the networks, the main motivation behind our approach is to obtain good starting guesses for the next level. This is key since, while deeper architectures offer more flexibility to model complicated data-label relation, they are in our experience more difficult to initialize. Additionally, the Gauss-Newton method used to estimate the parameters of the forward propagation benefits in general from good starting guesses.

\section{Numerical Examples}
\label{sec:experiments}
In this section we present numerical examples for classification problems of varying level of difficulty. We begin with three examples aiming at learning classification functions in two variables, which allow us to easily assess and illustrate the performance of the DNNs. In Section~\ref{sub:MNIST} we show results for the MNIST data set~\cite{LeCunEtAl1990,mnistlecun2009}, which is a common benchmark problem in image classification. 

\subsection{Concentric Ellipses}
\label{sub:ell}
As a first test we consider a small-scale test problem in two dimensions. The test data consists of 1,200 points that are evenly divided into two groups that form concentric ellipsoids; see left subplot in Figure~\ref{fig:circles}. The original data is randomly divided into 1,000 training examples and 200 examples used for validation. 

We train the original ResNet, the antisymmetric ResNet, and the Hamiltonian network with Verlet propagation using the block coordinate descent method and a multi-level strategy with $4, 8, 16, 32, 64, 128, 256, 1024$ layers. Each block coordinate descent iteration consists of a classification step ($\leq$2 iterations of Newton-PCG with $\leq 2$ PCG iterations) and a Gauss-Newton-PCG step to update the propagation weights ($\leq$20 PCG iterations preconditioned by regularization operator). In all examples we use the logistic regression hypothesis function in Eq.~\ref{eq:logReg} and $\tanh$ activation function. The final time is $T=20$ and the width of the network is $n=2$. 
To enforce smoothness of the propagation weights in time we employ the regularizer in Eq.~\ref{regKb} weighted by a factor of $\alpha=10^{-3}$. No regularization is used in the classification. 

We show the performance of the multi-level scheme in the left subplot of Figure~\ref{fig:ml}. For this simple data set, all forward propagation methods achieve an optimal validation accuracy of $100\%$ at some level. As to be expected, the validation accuracy increases with increasing depth of the network, which also results in more degrees of freedom. 
The results for the Verlet method at the final level are shown in Figure~\ref{fig:circles}. The two steps of deep learning (propagation and classification) can be seen in the center plot. The propagation transforms the feature such that they can be linearly separated. The result of the learning process is a network that predicts the class for all points in the 2D plane, which is illustrated in the right subplot of Figure~\ref{fig:circles}. 

\begin{figure}[t]
	\begin{center}
		\includegraphics[width=.95\textwidth]{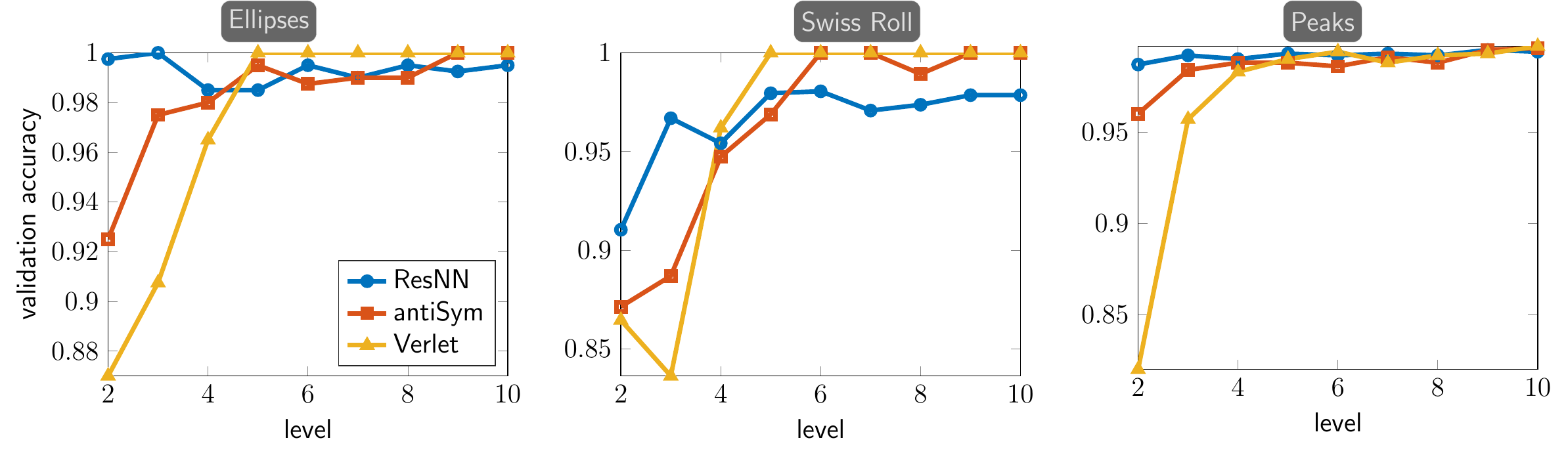}
	\end{center}
	\caption{Multi-level convergence for the two-dimensional examples in Sections~\ref{sub:ell}--\ref{sub:peaks}. For each level (level $\ell$ corresponds to DNN with $2^\ell$ layers) we show the best validation accuracy (the optimal value is $1$, which corresponds to a validation error of $0\%$).  }\label{fig:ml}
\end{figure}

\begin{figure}[t]
	\begin{center}
		\includegraphics[width=.95\textwidth]{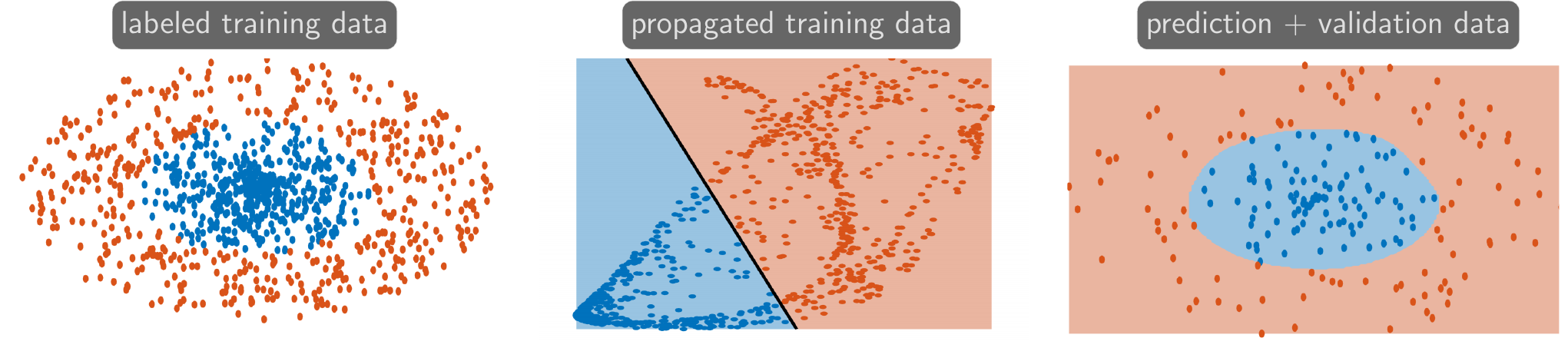}
	\end{center}
	\caption{Classification results for the ellipse problem from Section~\ref{sub:ell} using a Hamiltonian Neural Network with Verlet propagation with $N=1,024$ layers. Left: Labeled input features representing two concentric ellipses that are not linearly separable. Center: The output features are linearly separable and can be accurately classified. Right: We show the predictions of the network (colors in the background) superimposed by the validation data.  (This figure is optimized for screen use.)} 
	\label{fig:circles}
\end{figure}

\subsection{Swiss Roll}\label{sub:swiss}
We consider another small-scale test problem that is inspired by the swiss roll example. The data is obtained by sampling the vector functions
\begin{equation*}
	f_1(r,\theta) = r  \left(\begin{array}{r}
		  \cos(\theta) \\ \sin(\theta)
	\end{array}\right),
	\quad \text{ and } \quad
	f_2(r,\theta) = (r+0.2) \left(\begin{array}{r}
		   \cos(\theta) \\ \sin(\theta)
	\end{array}\right),
\end{equation*}
for $r\in[0,1]$ and $\theta \in [0,4\pi]$ at 513 points each. Every other point along the curve is removed from the data set and used for validation; see left subplot in Figure~\ref{fig:swiss}.

Given the data we use the same parameterization of the multi-level and block coordinate descent method as in the previous example. For all networks, except the Hamiltonian network with Verlet forward propagation, we increase the dimensionality of the input feature space to $n=4$. As before we use the $\tanh$ activation, logistic regression function in Eq.~\ref{eq:logReg}, and choose a final time of $T=20$. To enforce smoothness of the propagation weights in time we employ the regularizer in Eq.~\ref{regKb} weighted by a factor of $\alpha=5\cdot10^{-3}$. No regularization is used in the classification. 

We plot the validation accuracy for each network and the different steps of the multi-level strategy in the center subplot of Figure~\ref{fig:ml}. 
The standard ResNet and the Hamiltonian NN with Verlet forward propagation both achieve an optimal validation accuracy for $N=1,024$.  However, the convergence considerably faster for the Hamiltonian network that reaches the optimal accuracy with $N=32$ layers. 
 We visualize the results obtained using the Verlet method in Figure~\ref{fig:swiss}. 

\begin{figure}[t]
	\begin{center}
		\includegraphics[width=\textwidth]{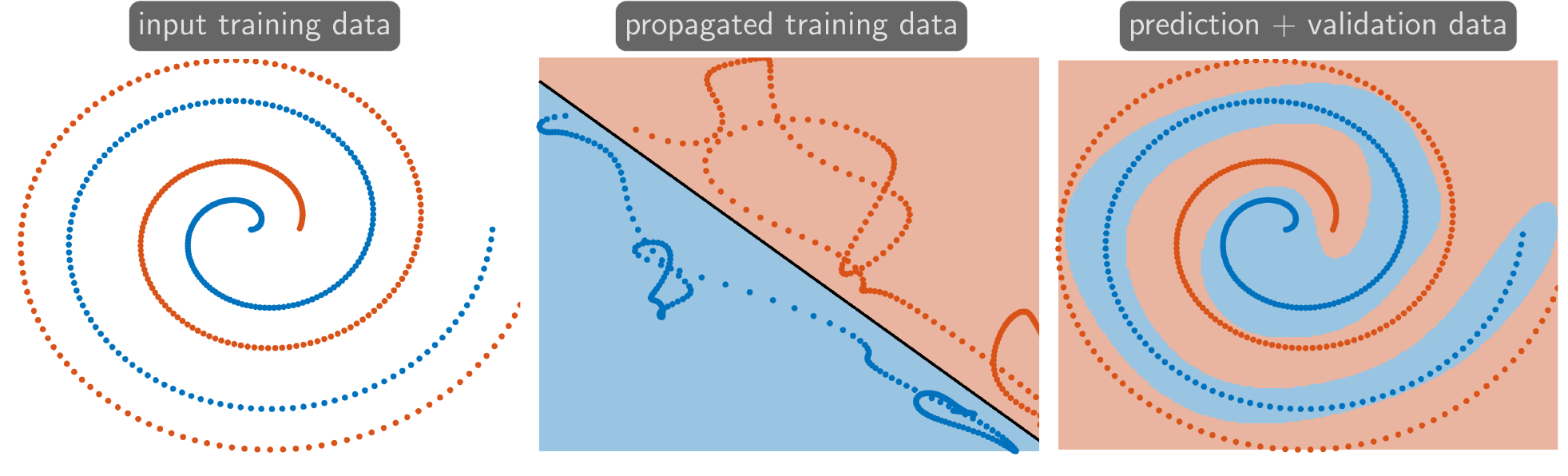}
	\end{center}
	\caption{Classification results for swiss roll example described in Section~\ref{sub:swiss} using a Hamiltonian network with $N=1,024$ layers and the Verlet forward propagation. Left: We show the training data (red and blue dots). Center: The Verlet method propagates the features so that they can be separated by the affine linear logistic regression classifier. Right: We show the interpolation obtained by our network (colored background) and the validation data. 
	(This figure is optimized for screen use.) }\label{fig:swiss}
\end{figure}

\subsection{Peaks}\label{sub:peaks}
We propose a new challenging test problem for classification into multiple classes using the \texttt{peaks} function in MATLAB$^\circledR$, which reads,
\begin{eqnarray*}
		f(x) = &  3 (1-x_1)^2 \exp(-(x_1^2) - (x_2+1)^2) - 10 (x_1/5 - x_1^3 - x_2^5) \\ 
		       &  \exp(-x_1^2-x_2^2) - 1/3 \exp(-(x_1+1)^2 - x_2^2),
\end{eqnarray*}
where $x \in [-3,3]^2$. The peaks function is smooth but has some nonlinearities and most importantly non-convex level sets. We discretize the function on a regular $256\times 256$ grid and divide the points into 5 different classes based on their function value. The points in each class are then randomly subsampled such that the training data approximately represents the volumes of the level sets. In our case the $s=5,000$ sample points are divided evenly into five classes of size 1,000. We illustrate the test data in the left subplot of Figure~\ref{fig:Peaks}.

We randomly choose 20\% of the data for validation and train the networks using the remaining examples. We use $\tanh$ as activation, the softmax hypothesis function in Eq.~\ref{eq:softmax},  and choose a final time of $T=5$. To enforce smoothness of the propagation weights in time we employ the regularizer in Eq.~\ref{regKb} weighted by a factor of $\alpha=5\cdot10^{-6}$. No regularization is used in the classification. We use the same multi-level strategy and identical parameters in the block coordinate descent methods  as in the previous examples.

We first train the standard ResNet and the antisymmetric variant where we use a width of $n=8$ by duplicating the features. The performance of both models is approximately the same; see right subplot in Figure~\ref{fig:ml}. The optimal accuracy, achieved for $N=1,024$, is around 98.8\%.

For the Hamiltonian network with Verlet forward propagation we use a narrower network containing only the original features (i.e., $n=2$ and no duplication). As in the previous experiments the accuracy generally improves with increasing depth (with one exception between levels 4 and 5). The optimal accuracy is obtained at the final level ($N=1,024$) with a validation error of $99.1\%$. We illustrate the results for the Verlet network in Figure~\ref{fig:Peaks}. The center subplot shows how the forward propagation successfully rearranges the features such that they can be labeled using a linear classifier. The right subplot shows that the prediction function fits the training data, but also approximates the true level sets.  

\begin{figure}[t]
	\begin{center}
		\includegraphics[width=\textwidth]{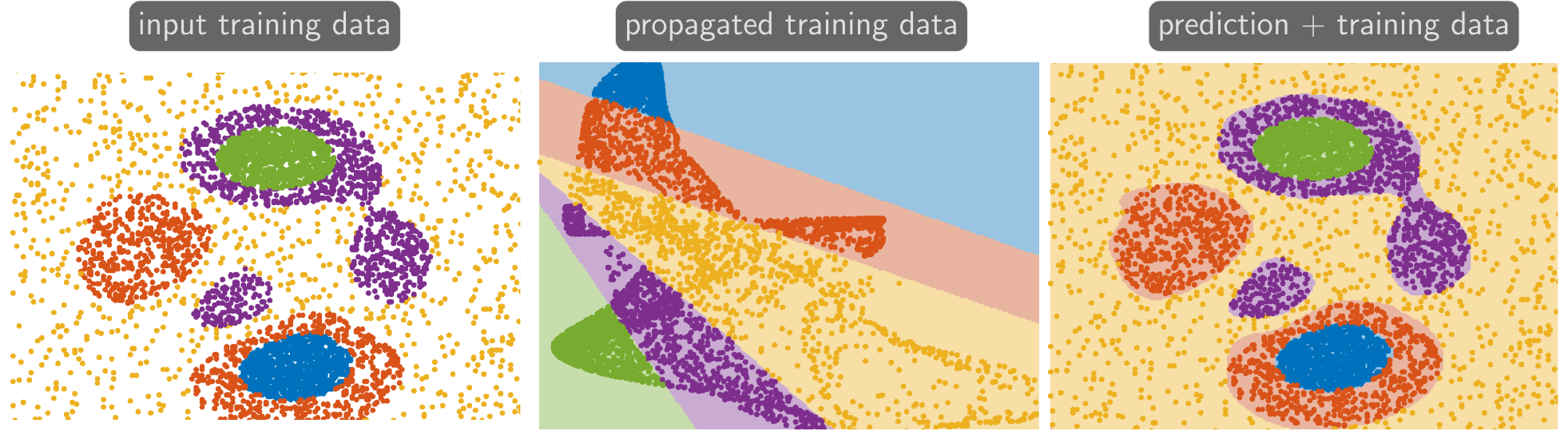}
	\end{center}
	\caption{Classification results for peaks example described in~\ref{sub:peaks} using a Hamiltonian network with $n=1,024$ layers and the Verlet forward propagation. Left: We illustrate the training data by colored dots that represent the class. Center: We show the propagated features and the predictions of the softmax classifier. Right: We depict the predictions of our network (colored background) and the training data. (This figure is optimized for screen use.)}\label{fig:Peaks}
\end{figure}

\subsection{MNIST}\label{sub:MNIST}

We use the MNIST dataset of hand-written digits~\cite{LeCunEtAl1990,mnistlecun2009} to illustrate the applicability of our methods in image classification problems.
The data set consists of 60,000 labeled digital images of size $28\times 28$ showing hand written digits from 0 to 9. We randomly divide the data into 50,000 training and 10,000 validation examples. We train the network using the standard and antisymmetric ResNet and the first-order Hamiltonian network using Verlet integration. 

We use a three-level multi-level strategy where the number of layers is $4, 8,$ and $16$. In each step we use the block coordinate descent method to approximately solve the learning problem and prolongate the forward propagation parameters to the next level using piecewise linear interpolation. The width of the network is 6 (yielding $n=4,704$ features used in the classification) and we use $3\times3$ convolution operators that are fully connected within a given layer to model the linear transformation matrices $\bfK_0, \ldots, \bfK_{N-1}$. The final time is set to $T=6$.  To compute the Gauss-Newton step we first compute the full gradient over all 50,000 examples and then randomly subsample 5,000 terms for Hessian computations in the PCG step. The maximum number of iterations is set to 20 at each layer.

Within each step of the block coordinate descent we solve the classification problem using at most 5 iterations of Newton-PCG with up to 10 inner iterations. We use a discretized Laplacian as a regularization operator in Eq.~\ref{regW} and its shifted symmetric product as a preconditioner to favor smooth modes; see~\cite{CalvettiSomersalo2005}. The regularization parameter used in the classification is $0.01$. The output values for all examples are used at the final layer and no pooling is performed. 

Smooth time dynamics are enforced by penalizing the time derivatives as outlined in Section~\ref{sub:timeReg} with a manually calibrated regularization parameter of $\alpha = 0.005$. The regularization operator is used to precondition the PCG iterations in the Gauss-Newton method. 

To show the performance of the multi-level strategy, we summarize the training and validation errors at the respective levels  in Table~\ref{tab:mnist}. The table shows similar performance for all forward propagation methods.  Note that both the validation and training error are reduced for larger number of layers, but no overfitting is observed.  In this experiment, the multi-level approach considerably simplified the initialization of the deep networks. For example, the initial parameters of the standard ResNet at the final level ($N=16$ layers) already gave a validation accuracy of 98.41\%, which was improved to 98.47\%. 
We illustrate the results of the antisymmetric ResNet, which yields slightly superior performance in terms of validation error, in Figure~\ref{fig:MNIST}. The smoothness of the classification weights enforced by our regularizer can be seen in the right column.

\begin{table}
	\begin{center}
		\begin{tabular}{|c|cc|cc|cc|}
			\hline
			       & \multicolumn{2}{|c|}{ResNet} & \multicolumn{2}{|c|}{anti symmetric ResNet} & \multicolumn{2}{|c|}{Hamiltonian Verlet} \\ \hline
			layers & TE      &  VE                & TE      &  VE                & TE      &  VE                \\ \hline
			4      &0.96\%   & 1.71\%            & 1.13\%  & 1.70\%            & 1.49\% &	2.29\%           \\
		    8 	   &0.80\%	 & 1.59\%            & 0.92\%	& 1.46\%             & 0.82\%   & 1.60\%            \\
		    16     &0.73\%   & 1.53\%             & 0.91\%	& 1.38\%           & 0.35\%   & 1.58\%                  \\ \hline
		\end{tabular}		
	\end{center}
	\caption{Multi-level training result for MNIST data set (see Section~\ref{sub:MNIST}). We show the Training Error (TE) and Validation Error (VE) for the standard and antisymmetric ResNet and the Hamiltonian inspired network with Verlet forward propagation for increasing number of layers in the network. The different forward propagation methods yield comparable results with the antisymmetric ResNet giving slightly lower validation errors at each level.}
	\label{tab:mnist}
\end{table}

\begin{figure}[t]
	\begin{center}
		\includegraphics[width=.9\textwidth]{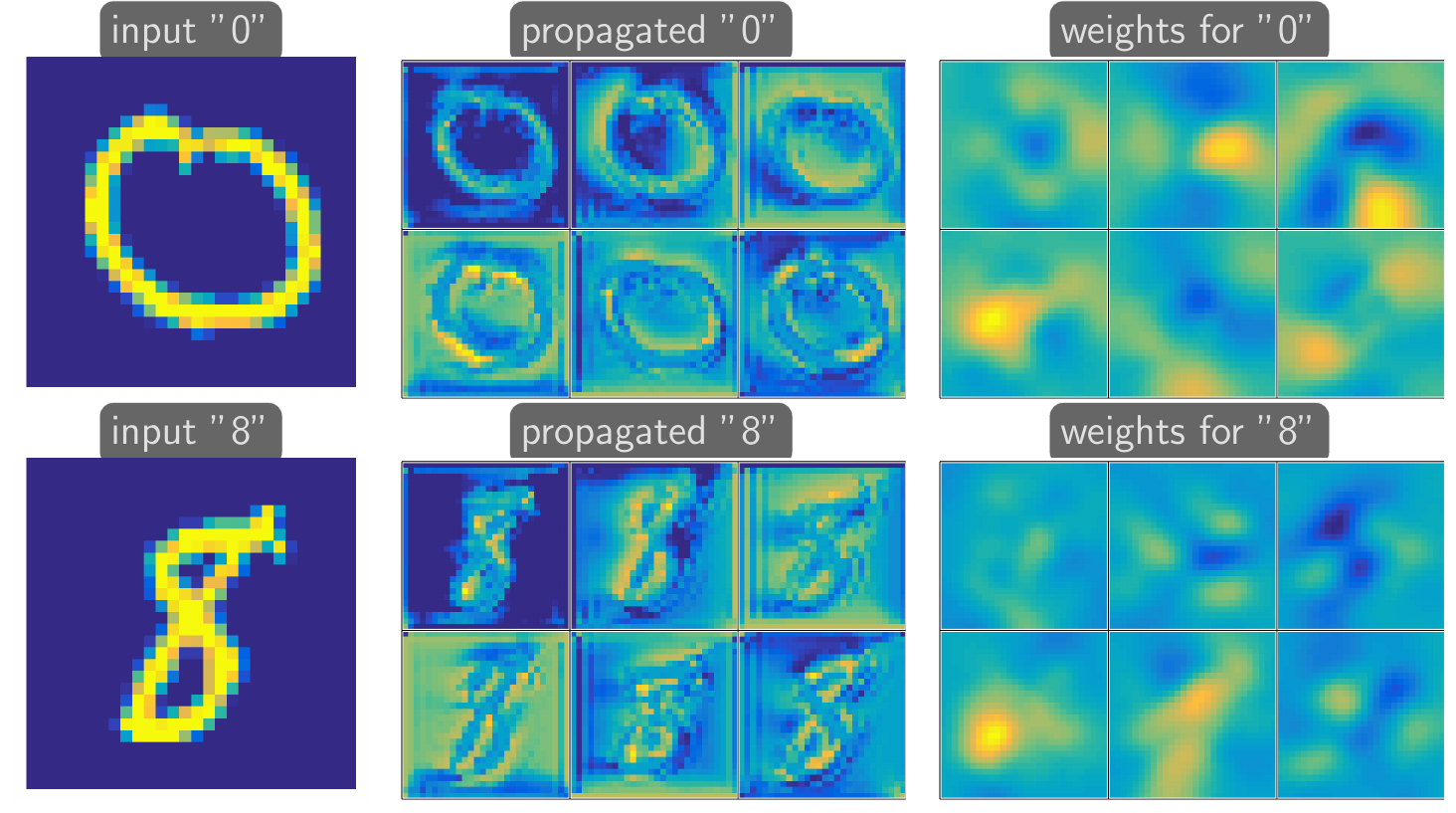}
	\end{center}
	\caption{Classification results for the MNIST data set  (see Section~\ref{sub:MNIST}) using an antisymmetric ResNet with $N=16$ layers and 6 neurons. We show two randomly chosen input images, their propagated versions at the output layer, and the classification weights for the two digits shown. The smoothness enforced by the second-order regularization operator is evident in the weights.} \label{fig:MNIST}
\end{figure}

\section{Summary and conclusions}
\label{sec:summary}
In this paper, we expose the relation between deep learning and dynamic inverse problems and lay the foundation for fruitful research at the interface of inverse problems and data science.
Specifically, we propose new architectures for deep neural networks (DNN) that improve the stability of the forward propagation. Using derivative-based regularization, we also improve the well-posedness of the learning task. 
We also propose a multi-level strategy that simplifies the choice of initial parameters and thereby simplifies the training of very deep networks.
Our forward propagation methods are inspired by Hamiltonian systems and motivated by a stability analysis that is based on a continuous formulation of the learning problem. 

Our stability discussion in Section~\ref{sec:stability} provides new insights why ResNets~\cite{he2016deep}, a commonly used forward propagation scheme in deep learning, can be unstable. Our result is based on a continuous interpretation of a simplified version of ResNets as a system of nonlinear ODEs and a standard stability argument. Using intuitive examples, we show the impact of the spectral properties of the transformation matrices on the stability of the forward propagation and state conditions under which gradients explode, vanish, or are stable. We also note that stability depends on the smoothness of the network parameters over time. Based on our findings and numerical experience, we argue that it is desirable to restrict ResNets to matrices that lead to Jacobians whose real parts of the spectrum are close to zero and penalize the time-derivative of network parameters through regularization. We also emphasize that the well-posedness of the learning problem in ResNets relies on sufficiently small time steps. Extending our analysis to more general versions of ResNet~\cite{he2016deep} is straightforward and will be investigated in future work.

Motivated by our theoretical considerations, we propose three new forward propagation methods in Section~\ref{sec:newArchitectures} that can be used to obtain well-posed learning problems for arbitrarily deep networks. The first one is a simple modification of ResNet that applies a linear transformation to the transformation matrices to annihilate the real parts of their eigenvalues. The other two methods are inspired by Hamiltonian systems and provide alternative ways to preserve in information in the forward propagation. The Hamiltonian networks require special integration techniques; see~\cite{ascherBook} for details on their treatment. The second-order forward propagation can be discretized using the leapfrog method, and our implementation of the first-order propagation uses the Verlet method. While the stability of the leapfrog method requires matrices with non-positive real eigenvalues, the Verlet method does not require any restrictions and yields the best performance in our numerical experiments.

Our approach to stabilizing the learning problem for very deep architectures differs significantly from existing approaches, most importantly \emph{batch normalization}~\cite{IoffeSzegedy2015}. Batch normalization scales values at hidden layers to prevent vanishing or exploding gradients and has been shown to improve the efficiency of training deep networks. In contrast to that our approach does not modify the values of the features, but alleviates the need of normalization by constructing stable forward propagation methods. 

In order to improve the generalization quality, improve stability, and simplify training of deep networks we also propose new regularization approaches that depend on our continuous formulation of the problem. We use derivative-based regularizers to favor smooth time dynamics and, for image classification problems, spatially smooth classification weights. To further regularize and simplify the initialization of the learning algorithm, we employ a multi-level learning strategy that gradually increases the depth of the network. In our experiments, this approach has been a simple yet effective way to obtain good initializations for the learning problems. Our regularization methods are commonly used in imaging science, e.g., image registration~\cite{ModSiamBook}, however, to the best of our knowledge not commonly employed in deep learning. Our numerical examples show that approximately solving the regularized learning problem yields works that generalize well even when the number of network parameters exceeds the number of training features.

We illustrate our methods using three academic test problems with available ground-truth and the MNIST problem~\cite{mnistlecun2009}, which is a commonly used benchmark problem for image classification. Our experiments show that the proposed new architectures yield results that are competitive with the established ResNet architecture. This is particularly noteworthy for the proposed anti-symmetric ResNet, where we restrict the dimensionality of the search space to ensure stability. 

By establishing a link between deep learning and dynamic inverse problems, we are positive that this work will stimulate further research by both communities. An important item of future work is investigating the impact of the proposed architectures on the performance of learning algorithms. Currently, stochastic gradient descent~\cite{RobbinsMonro1951} is commonly used to train deep neural networks due to its computational efficiency and empirical evidence supporting its generalizations~\cite{Bottou2012}. A specific question is if the improvements in the forward propagation and regularization proposed in this paper will lead to better generalization properties of subsampled second-order methods such as~\cite{ByrdEtAl2011,2016arXiv160104738R}. Another thrust of future work is the development of automatic parameter selection strategies for deep learning based on the approaches presented, e.g., in~\cite{ehn1,hansen,vogelbook, ho1, deSturlerKilmer2011}. A particular challenge in this application is the nontrivial relationship between the regularization parameters chosen for the classification and forward propagation parameters.

\section*{Acknowledgements}
This work is supported in part by National Science Foundation award DMS 1522599 and by the NVIDIA Corporation through their donation of a Titan X GPU.

\scriptsize

\end{document}